\newcommand{\ie}{i.\,e. }
\newcolumntype{H}{>{\setbox0=\hbox\bgroup}c<{\egroup}@{}}
\begin{document}

\title{Kernel Graph Convolutional Neural Networks}

\author{Giannis Nikolentzos\inst{1} \and
Polykarpos Meladianos\inst{2} \and
Antoine J.-P. Tixier\inst{1} \and
Konstantinos Skianis\inst{1} \and
Michalis Vazirgiannis\inst{1,2}}

\authorrunning{G. Nikolentzos et al.}

\institute{\'Ecole Polytechnique, France \\
\email{\{nikolentzos,anti5662,kskianis,mvazirg\}@lix.polytechnique.fr} \and
Athens University of Economics and Business, Greece \\
\email{pmeladianos@aueb.gr}}

\maketitle              

\begin{abstract}
Graph kernels have been successfully applied to many graph classification problems.
Typically, a kernel is first designed, and then an SVM classifier is trained based on the features defined implicitly by this kernel. This two-stage approach decouples data representation from learning, which is suboptimal.
On the other hand, Convolutional Neural Networks (CNNs) have the capability to learn their own features directly from the raw data during training. Unfortunately, they cannot handle irregular data such as graphs.
We address this challenge by using graph kernels to embed meaningful local neighborhoods of the graphs in a continuous vector space.
A set of filters is then convolved with these patches, pooled, and the output is then passed to a feedforward network. With limited parameter tuning, our approach outperforms strong baselines on 7 out of 10 benchmark datasets. Code and data are publicly available\footnote{\small{\url{https://github.com/giannisnik/cnn-graph-classification}}}.
\end{abstract}

\section{Introduction}\label{sec:introduction}
Graphs are powerful structures that can be used to model almost any kind of data. Social networks, textual documents, the World Wide Web, chemical compounds, and protein-protein interaction networks, are all examples of data that are commonly represented as graphs. As such, graph classification is a very important task, with numerous significant real-world applications.
However, due to the absence of a unified, standard vector representation of graphs, graph classification cannot be tackled with classical machine learning algorithms.

Kernel methods offer a solution to those cases where instances cannot be readily vectorized. The trick is to define a suitable object-object similarity function (known as a kernel function). Then, the matrix of pairwise similarities can be passed to a kernel-based supervised algorithm such as the Support Vector Machine to perform classification. With properly crafted kernels, this two-step approach was shown to give state-of-the-art results on many datasets \cite{shervashidze2011weisfeiler}, and has become standard and widely used. One major limitation of the graph kernel + SVM approach, though, is that representation and learning are two \textit{independent} steps. In other words, the features are precomputed in separation from the training phase, and are not optimized for the downstream task.

Conversely, Convolutional Neural Networks (CNNs) learn their own features from the raw data during training, to maximize performance on the task at hand. CNNs thus provide a very attractive alternative to the aforementioned two-step approach. However, CNNs are designed to work on regular grids, and thus cannot process graphs.

We propose to address this challenge by extracting patches from each input graph via community detection, and by embedding these patches with graph kernels. The patch vectors are then convolved with the filters of a 1D CNN and pooling is applied. Finally, to perform graph classification, a fully-connected layer with a softmax completes the architecture. We compare our proposed method with state-of-the-art graph kernels and a recently introduced neural architecture on 10 bioinformatics and social network datasets. Results show that our Kernel CNN model is very competitive, and offers in many cases significant accuracy gains.

\section{Related Work}\label{sec:related_work}
\subsubsection{Graph kernels.} A graph kernel is a kernel function defined on pairs of graphs. Graph kernels can be viewed as graph similarity functions, and currently serve as the dominant tool for graph classification. Most graph kernels compute the similarity between two networks by comparing their substructures, which can be specific subgraphs \cite{shervashidze2009efficient}, random walks \cite{vishwanathan2010graph}, cycles \cite{horvath2004cyclic}, or paths \cite{borgwardt2005shortest}, among others.
The Weisfeiler-Lehman framework operates on top of existing kernels and improves their performance by using a relabeling procedure based on the Weisfeiler-Lehman test of isomorphism \cite{shervashidze2011weisfeiler}. 
Recently, two other frameworks were presented for deriving variants of popular graph kernels \cite{yanardag2015deep,yanardag2015structural}. Inspired by recent advances in NLP, they offer a way to take into account substructure similarity.
Some graph kernels not restricted to comparing substructures of graphs but that also capture their global properties have also been proposed. Examples include graph kernels based on the Lov\'asz number and the corresponding orthonormal representation \cite{johansson2014global}, the pyramid match graph kernel that embeds vertices in a feature space and computes an approximate correspondence between them \cite{nikolentzos2017matching}, and the Multiscale Laplacian graph kernel, which captures similarity at different granularity levels by considering a hierarchy of nested subgraphs \cite{kondor2016multiscale}.

\subsubsection{Graph CNNs.} Extending CNNs to graphs has experienced a surge of interest in recent years. A first class of methods use spectral properties of graphs. An early generalization of the convolution operator to graphs was based on the eigenvectors of the Laplacian matrix \cite{bruna2014}. A more efficient model using Chebyshev polynomials approximation to represent the spectral filters was later presented \cite{defferrard2016convolutional}. All of these methods, however, assume a fixed graph structure and are thus not applicable to our setting. The model of \cite{defferrard2016convolutional} was then simplified by using a first-order approximation of the spectral filters \cite{kipf2016semi}, but within the context of a \textit{node} classification problem (which again, differs from our \textit{graph} classification setting). Unlike spectral methods, spatial methods \cite{niepert2016learning,vialatte2016generalizing} operate directly on the topology of the graph. Finally, some other techniques rely on node embeddings obtained as an unsupervised pre-processing step, like \cite{tixier2017classifying}, in which graphs are represented as stacks of bivariate histograms and passed to a classical 2D CNN for images.

The work closest to ours is probably \cite{niepert2016learning}. To extract a set of patches from the input graph, the authors (1) construct an ordered sequence of vertices from the graph, (2) create a neighborhood graph of constant size for each selected vertex, and (3) generate a vector representation (patch) for each neighborhood using graph labeling procedures such that nodes with similar structural roles in the neighborhood graph are positioned similarly in the vector space. The extracted patches are then fed to a 1D CNN. In contrast to the above work, we extract neighborhoods of varying sizes from the graph in a more direct and natural way (via community detection), and use graph kernels to normalize our patches. We present our approach in more details in the next section.

\section{Proposed approach}\label{sec:contribution}
In what follows, we present the main ideas and building blocks of our model. The overarching process flow is illustrated in Figure~\ref{fig:overview}.

\begin{figure}[t]
  \centering
  \includegraphics[width=.9\linewidth]{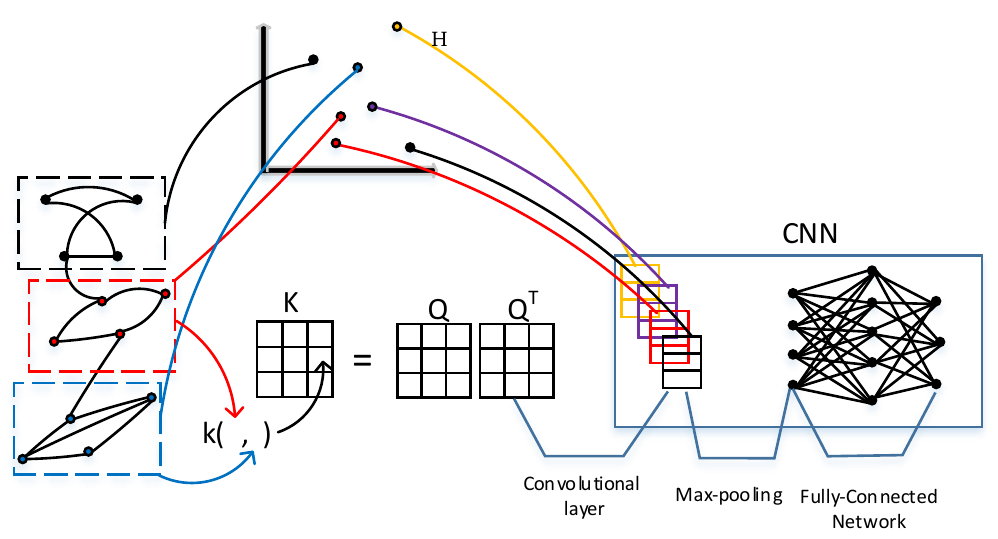}
  \caption{Overview of our Kernel Graph CNN approach.}
  \label{fig:overview}
\end{figure}

\subsection{Patch Extraction and Normalization}
Many types of real-world data are regular grids, and can thus be decomposed into units that are inherently ordered along spatial dimensions. This makes the task of patch extraction easy, and normalization unnecessary. For example, in computer vision (2D), meaningful patches are given by instantiating a rectangle window over the image. Furthermore, for all images, pixels are uniquely ordered along width and height, so there is a correspondence between the pixels in each patch, given by the spatial coordinates of the pixels. This removes the need for normalization. Likewise, in NLP, words in sentences are uniquely ordered from left to right, and a 1D window applied over text provides again natural regions. However, graphs do not exhibit such an underlying grid-like structure. They are irregular objects for which there exist no canonical ordering of the elementary units (nodes). Hence, generating patches from graphs, and normalizing them so that they are comparable and combinable, is a very challenging problem. To address these challenges, our approach leverages \textit{community detection} and \textit{graph kernels}.

\noindent\textbf{Patch extraction with community detection.}
There is a large variety of approaches for sampling from graphs. We can extract subgraphs for all vertices (which may be computationally intractable for large graphs) or for only a subset of them, such as the most central ones according to some metric. Furthermore, subgraphs may contain only the hop-1 neighborhood of a root vertex, or vertices that are further away from it. They may also be walks passing through the root vertex. A more natural way is to capitalize on \textit{community detection} algorithms \cite{fortunato2016community}, as the clusters correspond to meaningful graph partitions. Indeed, a community typically corresponds to a set of vertices that highly interact with each other, as expressed by the number and weight of the edges between them, compared to the other vertices in the graph. In this paper, we employ the Louvain clustering algorithm, which extracts non-overlapping communities of various sizes from a given graph \cite{blondel2008fast}. This multilevel algorithm aggregates each node with one of its neighbors such that the gain in modularity is maximized. Then, the groupings obtained at the first step are turned into nodes, yielding a new graph. The process iterates until a peak in modularity is attained and no more change occurs. Note that since our goal here is only to sample relevant local neighborhoods from the graph, we could have used any other state-of-the-art community detection algorithm. We opted for Louvain as it is very fast and scalable.

\noindent\textbf{Patch normalization with graph kernels.}
After extracting the subgraphs (communities) from a given input graph, standardization is necessary before being able to pass them to a CNN. We can define this step as that of \textit{patch normalization}. To this purpose, we leverage graph kernels, as described next. Note that since the steps below do not depend on the way the subgraphs were obtained, we use the term \textit{subgraph} (or \textit{patch}) rather than \textit{community} in what follows, to highlight the generality of our approach.

Let $\mathcal{G} = \{ G_1, G_2, \ldots, G_N \}$ be the collection of input graphs. Let $\mathcal{S}_1, \mathcal{S}_2, \ldots, \mathcal{S}_N$ be the sets of subgraphs extracted from graphs $G_1, G_2, \ldots, G_N$ respectively. Since the number of subgraphs extracted from each graph may depend on the graph (like in our case with the Louvain community detection algorithm), these sets vary in size.

Furthermore, let $S_i^j$ be the $j^{th}$ element of $\mathcal{S}_i$ (i.e., the $j^{th}$ subgraph extracted from $G_i$), and $P_i$ be the size of $\mathcal{S}_i$ (i.e., the total number of subgraphs extracted from $G_i$). Let then $\mathcal{S} = \{S_i^j : i \in \{1,2,\ldots,N\}, j \in \{1,2,\ldots,P_i \} \}$ be the set of subgraphs extracted from all the graphs in the collection, and $P$ its cardinality.
Let finally $K \in \mathbb{R}^{P \times P}$ be the symmetric positive semidefinite kernel matrix constructed from $\mathcal{S}$ using a graph kernel $k$.
Since the total number $P$ of subgraphs for all the graphs in the collection  is very large, populating the full kernel matrix $K$ and factorizing it to obtain low-dimensional representations of the subgraphs is $\mathcal{O}(P^3)$. Fortunately, the Nystr{\"o}m method \cite{williams2000using} allows us to obtain $Q \in \mathbb{R}^{P \times p}$ (with $p \ll P$) such that $K \approx QQ^\top$ at the reduced cost of $\mathcal{O}(p^2P)$, by using only a small subset of $p$ columns (or rows) of the kernel matrix. The rows of $Q$ are low-dimensional representations of the subgraphs and serve as our normalized patches.
 
\subsection{Graph processing}\label{sub:graph_proc}
\noindent\textbf{1D Convolution.}
To process a given input graph, many filters are convolved with the normalized representations of the patches contained in the graph. For example, for a given filter $w \in \mathbb{R}^p$, a feature $c_i$ is generated from the $j^{th}$ patch of graph $G_i$ $z_i^j$ as:
\begin{equation*}
	c_j = \sigma(w^\top z_i^j)
\end{equation*}
where $\sigma$ is an activation function. In this study, we used the identity function $\sigma(c) = c$, as we observed no difference in results compared to nonlinear activations. Therefore, when applied to a patch $z_i^j$, the convolution operation corresponds to the inner product $\langle w, z_i^j \rangle$.
We will show next that any filter $w$ with $||w|| < \infty$ learned by our network belongs to the Reproducing Kernel Hilbert Space (RKHS) $\mathcal{H}$ of the employed graph kernel $k$.
\begin{theorem}
The filters live in the RKHS of the kernel $k$ that was used to normalize the patches.
\end{theorem}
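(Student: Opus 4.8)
The plan is to make the convolution operation completely explicit: I would show that, applied to a patch, it coincides with the evaluation of a fixed function of $\mathcal{H}$ at the underlying subgraph, and that this function has finite RKHS norm whenever $\|w\|<\infty$. So the content of the theorem is really an unpacking of the Nystr\"om embedding together with the reproducing property.

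First I would recall the Nystr\"om construction in coordinates. Let $S_1,\dots,S_p$ be the $p$ landmark subgraphs selected by Nystr\"om, let $W\in\mathbb{R}^{p\times p}$ be the Gram matrix with $W_{mn}=k(S_m,S_n)$, and for a subgraph $S$ write $c(S)=\bigl(k(S,S_1),\dots,k(S,S_p)\bigr)^\top$. The standard factorization takes $Q=C\,W^{\dagger/2}$, where $C$ stacks the vectors $c(S)$ over all $S\in\mathcal{S}$ and $W^{\dagger/2}$ is the square root of the Moore--Penrose pseudoinverse of $W$; then $QQ^\top=CW^{\dagger}C^\top\approx K$, and the normalized patch of $S$ is the corresponding row $z(S)=W^{\dagger/2}c(S)$.

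Next I would invoke the reproducing property. Writing $\phi(S)=k(\cdot,S)\in\mathcal{H}$, so that $k(S,S_m)=\langle\phi(S),\phi(S_m)\rangle_{\mathcal{H}}$, a filter $w\in\mathbb{R}^p$ satisfies $\langle w,z(S)\rangle=(W^{\dagger/2}w)^\top c(S)=\sum_{m=1}^{p}\alpha_m\langle\phi(S),\phi(S_m)\rangle_{\mathcal{H}}$ with $\alpha:=W^{\dagger/2}w$. By bilinearity this equals $\langle\phi(S),f_w\rangle_{\mathcal{H}}$ for $f_w:=\sum_{m=1}^{p}\alpha_m\,k(\cdot,S_m)\in\mathcal{H}$, and one more application of the reproducing property gives $\langle w,z(S)\rangle=f_w(S)$. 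Hence convolving with $w$ is exactly the action of the RKHS element $f_w$ on the patch, which is the claim. For the norm bound, $\|f_w\|_{\mathcal{H}}^2=\alpha^\top W\alpha=w^\top W^{\dagger/2}WW^{\dagger/2}w=\|\Pi w\|^2\le\|w\|^2<\infty$, where $\Pi$ is the orthogonal projector onto $\operatorname{range}(W)$.

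I expect the only delicate point to be the linear-algebra bookkeeping with the pseudoinverse when $W$ is rank-deficient — verifying $W^{\dagger/2}WW^{\dagger/2}=\Pi$ and that $\Pi$ is a contraction — together with being explicit that the identity $\langle w,z(S)\rangle=f_w(S)$ holds \emph{exactly} for the Nystr\"om patches, regardless of how well $QQ^\top$ approximates $K$. Everything else reduces to bilinearity of $\langle\cdot,\cdot\rangle_{\mathcal{H}}$ and two uses of the reproducing property.
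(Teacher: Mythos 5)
Your proof is correct, and it takes a genuinely different route from the paper's. The paper argues at the level of the $P$ normalized patches: it asserts that $w$ lies in $\mathrm{Span}(\mathcal{Z})$, expands $w=\sum_{i',j'}a_{i'}^{j'}z_{i'}^{j'}$, and uses the identity $\langle z_i^j, z_{i'}^{j'}\rangle = k(S_i^j,S_{i'}^{j'})$ to rewrite $\langle w,z_i^j\rangle$ as $\sum_{i',j'}a_{i'}^{j'}k(S_{i'}^{j'},S_i^j)$, so the RKHS element is a combination of all $P$ feature maps $\phi(S_{i'}^{j'})$. You instead open up the Nystr\"om factorization in coordinates and land on $f_w=\sum_{m=1}^p\alpha_m k(\cdot,S_m)$, a combination of only the $p$ landmark functions. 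Your version buys three things the paper's argument does not make explicit: (i) it sidesteps the spanning assumption on $w$ (the paper's claim that an arbitrary finite-norm filter lies in $\mathrm{Span}(\mathcal{Z})$ is only true up to a component orthogonal to the patches, which is harmless but unaddressed there); (ii) the identity $\langle w,z(S)\rangle=f_w(S)$ is exact by construction, whereas the paper's opening identity $\langle z_i^j,z_{i'}^{j'}\rangle=k(S_i^j,S_{i'}^{j'})$ holds only approximately under Nystr\"om unless the kernel matrix has rank at most $p$ and the landmarks span the relevant subspace; and (iii) you get the quantitative bound $\|f_w\|_{\mathcal{H}}\le\|w\|$ for free via the projector $\Pi$. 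The one dependency you introduce is the specific factorization $Q=CW^{\dagger/2}$; this is the standard Nystr\"om construction the paper cites, so the assumption is benign, but it is worth stating since the paper only posits $K\approx QQ^\top$ abstractly.
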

\begin{proof}
Given two subgraphs $S_i^j$ and $S_{i'}^{j'}$ extracted from $G_i$ and $G_i'$ and their associated normalized patches $z_i^j$ and $z_{i'}^{j'}$, it holds that:
\begin{equation*}
	\langle z_i^j, z_{i'}^{j'} \rangle = k(S_i^j, S_{i'}^{j'}) = \langle \phi(S_i^j), \phi(S_{i'}^{j'}) \rangle_{\mathcal{H}}
\end{equation*}
Let $\mathcal{Z} = \{z_i^j : i \in \{1,2,\ldots,N\}, j \in \{1,2,\ldots,P_i \} \}$ be the set containing all patches of the input graphs.
Then, $\mathrm{Span}(\mathcal{Z})$ is either the space of all vectors in $\mathbb{R}^P$ if the rank of the kernel matrix is $P$ or the space of all vectors in $\mathbb{R}^P$ whose last $t$ components are zero if the rank of the kernel matrix is $P-t$ where $t > 0$.
Then, given a patch $z_i^j$, vector $w$ is contained in $\mathrm{Span}(\mathcal{Z})$, hence:
\begin{equation*}
\begin{split}
	\sigma(w^\top z_i^j) = \langle w, z_i^j \rangle &= \langle \sum_{i'=1}^N \sum_{j'=1}^{P_i} a_{i'}^{j'} z_{i'}^{j'}, z_i^j \rangle \\
	& = \sum_{i'=1}^N \sum_{j'=1}^{P_i} a_{i'}^{j'} \langle z_{i'}^{j'}, z_i^j \rangle = \sum_{i'=1}^N \sum_{j'=1}^{P_i} a_{i'}^{j'} k(S_{i'}^{j'}, S_i^j)
\end{split}
\end{equation*}
which shows that the filters live in the RKHS associated to graph kernel $k$. For other smooth activation functions, one can also show that the filters will be contained in the corresponding RKHS of the kernel function \cite{zhang17f}.
\end{proof}
Note that the proposed approach can be thought of as a CNN that works directly on graphs. In computer vision, convolution corresponds to the element-wise multiplication between part of an image and a filter followed by summation.
Convolution can thus be viewed as an inner-product where the output is a single feature.
In our setting, convolution corresponds to the inner-product between part of a graph (\ie a patch) and a filter (\ie a graph). Such an inner-product is implicitly computed using a graph kernel, and the output is also a single feature.

By convolving $w$ with all the normalized patches of the graph, the following feature map is produced:
\begin{equation*}
  c = [c_1, c_2, \ldots, c_{P_{max}}]^\top
\end{equation*}
where $P_{max} = \max(P_{i}: i \in \{1,2,\ldots,N\})$ is the largest number of patches extracted from any given graph in the collection. For graphs featuring less than $P_{max}$ patches, zero-padding is employed.

Note that this approach is similar to concatenating all the vector representations of the patches contained in a given graph (padding if necessary), thus obtaining a single vector representation of the graph, and sliding over it a unidimensional filter of size the length of a single patch vector, without overspanning patches (i.e., with stride equal to filter size).

\noindent\textbf{Pooling.} We then apply a max-pooling operation over the feature map, thus retaining only the maximum value of $c$, $\max(c_1,c_2,\ldots,c_{P_{max}})$, as the signal associated with $w$. The intuition is that some subgraphs of a graph are good indicators of the class the graph belongs to, and that this information will be picked up by the max-pooling operation.

\subsection{Processing new graphs}
When provided with a never-seen graph (at test time), we first sample subgraphs from it (here, via community detection), and then project them to the feature space of the subgraphs in the training set. Given a new subgraph $S^j$, its projection can be computed as $z^j = Q^\dagger v$ where $Q^\dagger \in \mathbb{R}^{p \times P} $ is the pseudoinverse of $Q \in \mathbb{R}^{P \times p}$ and $v \in \mathbb{R}^P$ is the vector containing the kernel value between $S^j$ and all $P$ subgraphs in the training set (those contained in set $\mathcal{S}$).
The dimensionality $p$ of the emerging vector is the same as that of the normalized patches in the training set. Thus, this vector can be convolved with the filters of the CNN as previously described.

\subsection{Channels}\label{sub:channels}
Rather than selecting one graph kernel in particular to normalize the patches, several kernels can be jointly used. The different representations provided by each kernel can then be passed to the CNN through different channels, or \textit{depth} dimensions. Intuitively, this can be very beneficial, as each kernel might capture different, complementary aspects of similarity between subgraphs. We experimented with the following popular kernels:\\
$\bullet$ \textbf{Shortest path kernel (SP)} \cite{borgwardt2005shortest}: to compute the similarity between two graphs, this kernel counts how many pairs of shortest paths have the same source and sink labels, and identical length, in the two graphs. The runtime complexity for a pair of graphs featuring ${n_1}$ and ${n_2}$ nodes is $\mathcal{O}({n_1}^2 {n_2}^2)$.\\ 
$\bullet$ \textbf{Weisfeiler-Lehman subtree kernel (WL)} \cite{shervashidze2011weisfeiler}: for a certain number $h$ of iterations, this kernel performs an exact matching between the compressed multiset labels of the two graphs, while at each iteration it updates these labels.
It requires $\mathcal{O}(hm)$ time for a pair of graphs with $m$ edges.

This gave us two single channel models (KCNN SP, KCNN WL), and one model with two channels (KCNN SP+WL).

\section{Experimental Setup}\label{sec:experiments}

\subsection{Synthetic Dataset}\label{sub:synthetic_dataset}
\textbf{Dataset.} As previously mentioned, the intuition is that our proposed KCNN model is particularly well suited for settings where some regions in the graphs are highly discriminative of the class the graph belongs to.
To empirically verify this claim, we created a dataset featuring $1000$ synthetic graphs generated as follows. First, we generate an Erdos-R{\'e}nyi graph with number of vertices sampled from $\mathbb{Z}\cap\big[100,200\big]$ with uniform probability, and edge probability equal to $0.1$. We then add to the graph either a $10$-clique or a $10$-star graph by connecting the vertices with probability $0.1$. The first class of the dataset is made of the graphs containing a $10$-clique, while the second class features the graphs containing a $10$-star subgraph. The two classes are of equal size ($500$ graphs each).

\noindent\textbf{Baselines.} We compared our model against the shortest-path kernel (SP) \cite{borgwardt2005shortest}, the Weisfeiler-Lehman subtree kernel (WL) \cite{shervashidze2011weisfeiler}, and the graphlet kernel (GR) \cite{shervashidze2009efficient}.

\noindent\textbf{Configuration.} We performed $10$-fold cross-validation. The $C$ parameter of the SVM (for all graph kernels) and the number of iterations (for the WL kernel baseline) were optimized on a 90/10 split of the training set of each fold. For the graphlet kernel, we sampled $1000$ graphlets of size up to $6$ from each graph. For our proposed KCNN, we used an architecture with one convolution-pooling block followed by a fully connected layer with $128$ units. The \texttt{ReLU} activation was used, and regularization was ensured with dropout ($0.5$ rate). A final \texttt{softmax} layer was added to complete the architecture. The dimensionality of the normalized patches (number of columns of $Q$) was set to $p=100$, and we used $256$ filters (of size $p$, as explained in subsection \ref{sub:graph_proc}). Batch size was set to $64$, and the number of epochs and learning rate were optimized by performing 10-fold cross-validation on the training set of each fold. All experiments were run on a single machine consisting of a $3.4$ GHz Intel Core i$7$ CPU with $16$ GB of RAM and an NVidia GeForce Titan Xp GPU.

\noindent\textbf{Results.} We report in Table~\ref{tab:results_synthetic} average prediction accuracies of our three models in comparison to the baselines. Results validated the hypothesis that our proposed model (KCNN) can identify those areas in the graphs that are most predictive of the class labels, as its three variants achieved accuracies greater than $98\%$. 
Conversely, the baseline kernels failed to discriminate between the two categories.
Hence, it is clear that in such settings, our model is more effective than existing methods.
\begin{table}[t]
\caption{Classification accuracy of state-of-the-art graph kernels: shortest path (SP), graphlet (GR), and Weisfeiler-Lehman subtree (WL); and the single and multichannel variants of our approach (KCNN), on the synthetic dataset.}
\vspace{-.2cm}
\centering
\scriptsize
\begin{tabular}{ |c|c|c|c|c|c| } 
 \hline
 SP & GR & WL & KCNN SP & KCNN WL & KCNN SP+WL \\
  \hline
 75.47 & 69.34 & 65.88 & 98.20 & 97.25 & \textbf{98.40} \\ 
 \hline
\end{tabular}
\label{tab:results_synthetic}
\vspace{-.5cm}
\end{table}

\subsection{Real-World Datasets}\label{sec:datasets}
\noindent\textbf{Datasets.} We also evaluated the performance of our approach on five bioinformatics (ENZYMES, NCI1, PROTEINS, PTC-MR, D\&D) and five social network datasets (IMDB-BINARY, IMDB-MULTI, REDDIT-BINARY, REDDIT-MULTI-5K, COLLAB)\footnote{The datasets, further references and statistics are available at \url{https://ls11-www.cs.tu-dortmund.de/staff/morris/graphkerneldatasets}}.
Notice that the bioinformatics datasets are labeled (labels on vertices), while the social interaction datasets are not.

\noindent\textbf{Baselines.} We evaluated our model in comparison with the shortest-path kernel (SP) \cite{borgwardt2005shortest}, the random walk kernel (RW) \cite{vishwanathan2010graph}, the graphlet kernel (GR) \cite{shervashidze2009efficient}, the Weisfeiler-Lehman subtree kernel (WL) \cite{shervashidze2011weisfeiler}, the best kernel from the deep graph kernel framework (Deep Graph Kernels) \cite{yanardag2015deep}, and a recently proposed graph CNN (PSCN $k=10$) \cite{niepert2016learning}. Since the experimental setup is the same, we report the results of \cite{yanardag2015deep} and \cite{niepert2016learning}.

\noindent\textbf{Configuration.} Same as~\ref{sub:synthetic_dataset} above.

\noindent\textbf{Results.} The $10$-fold cross-validation average test set accuracy of our approach and the baselines is reported in Table~\ref{tab:results}. Our approach outperforms all baselines on $7$ out of the $10$ datasets.
In some cases, the gains in accuracy over the best performing competitors are considerable.
For instance, on the IMDB-MULTI, COLLAB, and D\&D datasets, we offer respective \textit{absolute} improvements of $2.23\%$, $2.33\%$, and $2.56\%$ in accuracy over the best competitor, the state-of-the-art graph CNN (PSCN $k=10$).
Finally, it should be noted that on the IMDB-MULTI dataset, every variant of our architecture outperforms \textit{all} baselines.

\begin{table}[t]
\caption{$10$-fold cross validation average classification accuracy ($\pm$ standard deviation) of the proposed models and the baselines on the bioinformatics (top) and social network (bottom) datasets. Best performance per dataset in \textbf{bold}, among the variants of our Kernel CNN model \underline{underlined}.}
\vspace{-.7cm}
\begin{center}
\def\arraystretch{1.1}
\sisetup{
detect-all,
table-text-alignment = center,
table-number-alignment = center,
table-figures-integer = 2,
table-figures-decimal = 2,
}
\resizebox{\textwidth}{!} {
\begin{tabular}{|l|Hccccc|} \hline
\multirow{2}{*}{\backslashbox{Method}{Dataset}} & \multirow{2}{*}{MUTAG} & \multirow{2}{*}{ENZYMES} & \multirow{2}{*}{NCI1} & \multirow{2}{*}{PROTEINS} & \multirow{2}{*}{PTC-MR} & \multirow{2}{*}{D\&D} \\
& & & & & & \\ \hline \hline
SP & 85.79 ($\pm$ 2.51) & 40.10 ($\pm$ 1.50) & 73.00 ($\pm$ 0.51) & 75.07 ($\pm$ 0.54) & 58.24 ($\pm$ 2.44) & $>$ 3 days  \\
GR & 81.66 ($\pm$ 2.11) & 26.61 ($\pm$ 0.99) & 62.28 ($\pm$ 0.29) & 71.67 ($\pm$ 0.55) & 57.26 ($\pm$ 1.41) & 78.45 ($\pm$ 0.26) \\ 
RW & 83.72 ($\pm$ 1.50) & 24.16 ($\pm$ 1.64) & $>$ 3 days & 74.22 ($\pm$ 0.42) & 57.85 ($\pm$ 1.30) & $>$ 3 days \\ 
WL & 80.72 ($\pm$ 3.00) & 53.15 ($\pm$ 1.14)  & 80.13 ($\pm$ 0.50) & 72.92 ($\pm$ 0.56) & 56.97 ($\pm$ 2.01) & 77.95 ($\pm$ 0.70) \\ 
Deep Kernels & 87.44 ($\pm$ 2.72) & \textbf{53.43} ($\pm$ 0.91) & \textbf{80.31} ($\pm$ 0.46) & 75.68 ($\pm$ 0.54) & 60.08 ($\pm$ 2.55) & NA \\
PSCN $k=10$ & \textbf{88.95} ($\pm$ 4.37) & NA & 76.34 ($\pm$ 1.68) & 75.00 ($\pm$ 2.51) & 62.29 ($\pm$ 5.68) & 76.27 ($\pm$ 2.64) \\  \hline \hline
KCNN SP & \underline{84.17} ($\pm$ 0.42) & 46.35 ($\pm$ 0.39) & 75.70 ($\pm$ 0.31) & 74.27 ($\pm$ 0.22) & \textbf{\underline{62.94}} ($\pm$ 1.69) & 76.63 ($\pm$ 0.09) \\
KCNN WL & 84.09 ($\pm$ 0.12) & 43.08 ($\pm$ 0.68) & 75.83 ($\pm$ 0.25) & \textbf{\underline{75.76}} ($\pm$ 0.28) & 61.52 ($\pm$ 1.41) & 75.80 ($\pm$ 0.07) \\ \hline \hline
KCNN SP + WL & 83.71 ($\pm$ 0.16) & \underline{48.12} ($\pm$ 0.23) & \underline{77.21} ($\pm$ 0.22) & 73.79 ($\pm$ 0.29) & 62.05 ($\pm$ 1.41) & \textbf{\underline{78.83}} ($\pm$ 0.29) \\ \hline
\end{tabular}
}
\vspace{.1cm}
\\
\resizebox{\textwidth}{!} {
\begin{tabular}{|l|ccccc|} \hline
\multirow{2}{*}{\backslashbox{Method}{Dataset}} & IMDB & IMDB & REDDIT & REDDIT & \multirow{2}{*}{COLLAB}\\
& BINARY & MULTI & BINARY & MULTI-5K & \\ \hline \hline
GR & 65.87 ($\pm$ 0.98) & 43.89 ($\pm$ 0.38) & 77.34 ($\pm$ 0.18) & 41.01 ($\pm$ 0.17) & 72.84 ($\pm$ 0.28) \\
Deep GR & 66.96 ($\pm$ 0.56) & 44.55 ($\pm$ 0.52) & 78.04 ($\pm$ 0.39) & 41.27 ($\pm$ 0.18) & 73.09 ($\pm$ 0.25) \\ 
PSCN $k=10$ & 71.00 ($\pm$ 2.29) & 45.23 ($\pm$ 2.84) & \textbf{86.30} ($\pm$ 1.58) & 49.10 ($\pm$ 0.70) & 72.60 ($\pm$ 2.15) \\ \hline \hline
KCNN SP & 69.60 ($\pm$ 0.44) & 45.99 ($\pm$ 0.23) & 77.23 ($\pm$ 0.15) & 44.86 ($\pm$ 0.24) & 70.78 ($\pm$ 0.12) \\
KCNN WL & 70.46 ($\pm$ 0.45) & 46.44 ($\pm$ 0.24) & \underline{81.85} ($\pm$ 0.12) & \textbf{\underline{50.04}} ($\pm$ 0.19) & \textbf{\underline{74.93}} ($\pm$ 0.14) \\ \hline \hline
KCNN SP + WL & \textbf{\underline{71.45}} ($\pm$ 0.15) & \textbf{\underline{47.46}} ($\pm$ 0.21) & 78.35 ($\pm$ 0.11) & 44.63 ($\pm$ 0.18) & 74.12 ($\pm$ 0.17) \\\hline
\end{tabular}
}
\label{tab:results}
\end{center}
\vspace{-.5cm}
\end{table}

\noindent\textbf{Interpretation.} Overall, our Kernel CNN model reaches better performance than the classical graph kernels (SP, GR, RW, and WL), showing that the ability of CNNs to learn their own features during training is superior to disjoint feature computation and learning. It is true that our approach also comprises two disjoint steps. However, the first step is only a \textit{data preprocessing} step, where we extract neighborhoods from the graphs, and normalize them with graph kernels. The features used for classification are then learned \textit{during training} by our neural architecture, unlike the GK + SVM approach, where the features, given by the kernel matrix, are computed in advance, independently from the downstream task.

Our two single-channel architectures perform comparably on the bioinformatics datasets, while the KCNN WL variant was superior on the social network datasets. On the REDDIT-BINARY, REDDIT-MULTI-5K and COLLAB datasets, KCNN WL also outperforms the multichannel architecture, with quite wide margins. The multi-channel architecture (KCNN SP + WL) leads to better results on 5 out of the 10 datasets, showing that capturing subgraph similarity from a variety of angles sometimes helps.

\begin{table}[t]
\caption{$10$-fold cross validation runtime of proposed models on the $10$ real-world graph classification datasets.}
\vspace{-.7cm}
\begin{center}
\def\arraystretch{1.1}
\resizebox{\textwidth}{!} {
\begin{tabular}{lHcccccccccc} \hline
\multirow{2}{*}{} & \multirow{2}{*}{MUTAG} & \multirow{2}{*}{ENZYMES} & \multirow{2}{*}{NCI1} & \multirow{2}{*}{PROTEINS} & \multirow{2}{*}{PTC-MR} & \multirow{2}{*}{D\&D} & IMDB & IMDB & REDDIT & REDDIT & \multirow{2}{*}{COLLAB} \\ 
& & & & & & & BINARY & MULTI & BINARY & MULTI-5K\\ \hline
KCNN SP & 14'' & 28'' &  4' 26'' & 42'' & 22'' & 54'' & 36'' & 1' 41'' & 5' 29'' & 15' 2'' & 7' 2'' \\
KCNN WL & 15'' & 53'' & 4' 54'' & 48'' & 22'' & 1' 33'' & 41'' & 58'' & 5' 22'' & 14' 23'' & 8' 58'' \\
KCNN SP+WL & 16'' & 1' 13'' & 5' 1'' & 53'' & 25'' & 1' 46'' & 45'' & 1' 44'' & 9' 57'' & 24' 28'' & 10' 24'' \\ \hline
\end{tabular}
}
\label{tab:runtimes}
\end{center}
\vspace{-.8cm}
\end{table}

\noindent\textbf{Runtimes.} We also report the time cost of our three models in Table \ref{tab:runtimes}.
Runtime includes all steps of the process: patch extraction, path normalization, and $10$-fold cross validation procedure.
We can see that the computational complexity of the proposed models is not high. Our most computationally intensive model (KCNN SP+WL) takes less than 25 minutes to perform the full 10-fold cross validation procedure on the largest dataset (REDDIT-MULTI-5K).
Moreover, in most cases, the running times are lower or comparable to the ones of the state-of-the-art Graph CNN and Deep Graph Kernels models \cite{niepert2016learning,yanardag2015deep}.

\section{Conclusion}\label{sec:conclusion}
In this paper, we proposed a method that combines graph kernels with CNNs to learn graph representations and to perform graph classification. Our Kernel Graph CNN model (KCNN) outperforms 6 state-of-the-art graph kernels and graph CNN baselines on 7 datasets out of 10.

\bibliographystyle{splncs04}
\bibliography{biblio}

\begin{thebibliography}{10}
\providecommand{\url}[1]{\texttt{#1}}
\providecommand{\urlprefix}{URL }
\providecommand{\doi}[1]{https://doi.org/#1}

\bibitem{blondel2008fast}
Blondel, V.D., Guillaume, J.L., Lambiotte, R., Lefebvre, E.: Fast unfolding of
  communities in large networks. JSTAT  \textbf{2008}(10),  1--12 (2008)

\bibitem{borgwardt2005shortest}
Borgwardt, K.M., Kriegel, H.: Shortest-path kernels on graphs. In: ICDM. pp.
  74--81 (2005)

\bibitem{bruna2014}
Bruna, J., Zaremba, W., Szlam, A., LeCun, Y.: {Spectral Networks and Locally
  connected networks on Graphs}. In: ICLR (2014)

\bibitem{defferrard2016convolutional}
Defferrard, M., Bresson, X., Vandergheynst, P.: {Convolutional Neural Networks
  on Graphs with Fast Localized Spectral Filtering}. In: NIPS. pp. 3837--3845
  (2016)

\bibitem{fortunato2016community}
Fortunato, S., Hric, D.: Community detection in networks: A user guide. Physics
  Reports  \textbf{659},  1--44 (2016)

\bibitem{horvath2004cyclic}
Horv{\'a}th, T., G{\"a}rtner, T., Wrobel, S.: {Cyclic Pattern Kernels for
  Predictive Graph Mining}. In: KDD. pp. 158--167 (2004)

\bibitem{johansson2014global}
Johansson, F., Jethava, V., Dubhashi, D., Bhattacharyya, C.: Global graph
  kernels using geometric embeddings. In: ICML. pp. 694--702 (2014)

\bibitem{kipf2016semi}
Kipf, T.N., Welling, M.: {Semi-Supervised Classification with Graph
  Convolutional Networks}. In: ICLR (2017)

\bibitem{kondor2016multiscale}
Kondor, R., Pan, H.: {The Multiscale Laplacian Graph Kernel}. In: NIPS. pp.
  2982--2990 (2016)

\bibitem{niepert2016learning}
Niepert, M., Ahmed, M., Kutzkov, K.: {Learning Convolutional Neural Networks
  for Graphs}. In: ICML (2016)

\bibitem{nikolentzos2017matching}
Nikolentzos, G., Meladianos, P., Vazirgiannis, M.: {Matching Node Embeddings
  for Graph Similarity}. In: AAAI. pp. 2429--2435 (2017)

\bibitem{shervashidze2011weisfeiler}
Shervashidze, N., Schweitzer, P., Van~Leeuwen, E.J., Mehlhorn, K., Borgwardt,
  K.M.: {Weisfeiler-Lehman Graph Kernels}. JMLR  \textbf{12},  2539--2561
  (2011)

\bibitem{shervashidze2009efficient}
Shervashidze, N., Vishwanathan, S., Petri, T., Mehlhorn, K., Borgwardt, K.M.:
  Efficient graphlet kernels for large graph comparison. In: AISTATS. pp.
  488--495 (2009)

\bibitem{tixier2017classifying}
Tixier, A., Nikolentzos, G., Meladianos, P., Vazirgiannis, M.: {Classifying
  Graphs as Images with Convolutional Neural Networks}. arXiv:1708.02218
  (2017)

\bibitem{vialatte2016generalizing}
Vialatte, J.C., Gripon, V., Mercier, G.: {Generalizing the Convolution Operator
  to extend CNNs to Irregular Domains}. arXiv preprint arXiv:1606.01166  (2016)

\bibitem{vishwanathan2010graph}
Vishwanathan, S.V.N., Schraudolph, N.N., Kondor, R., Borgwardt, K.M.: {Graph
  Kernels}. JMLR  \textbf{11},  1201--1242 (2010)

\bibitem{williams2000using}
Williams, C.K., Seeger, M.: {Using the Nystr{\"o}m Method to Speed Up Kernel
  Machines}. In: NIPS. pp. 661--667 (2000)

\bibitem{yanardag2015structural}
Yanardag, P., Vishwanathan, S.: {A Structural Smoothing Framework For Robust
  Graph Comparison}. In: NIPS. pp. 2125--2133 (2015)

\bibitem{yanardag2015deep}
Yanardag, P., Vishwanathan, S.: {Deep Graph Kernels}. In: KDD. pp. 1365--1374
  (2015)

\bibitem{zhang17f}
Zhang, Y., Liang, P., Wainwright, M.J.: Convexified convolutional neural
  networks. In: ICML. pp. 4044--4053 (2017)

\end{thebibliography}

\end{document}